\theoremstyle{plain}
\newtheorem{theorem}{Theorem}[section]
\newtheorem{lemma}[theorem]{Lemma}
\theoremstyle{definition}
\newtheorem{definition}[theorem]{Definition}
\newtheorem{assumption}[theorem]{Assumption}
\theoremstyle{remark}
\newcommand{\eq}[1]{\begin{align}#1\end{align}}
\icmltitlerunning{The Learning phases in NN}
\begin{document}

\twocolumn[
\icmltitle{The learning phases in NN: From Fitting the Majority to Fitting a Few}

\icmlsetsymbol{equal}{*}

\begin{icmlauthorlist}
\icmlauthor{Johannes Schneider}{equal,yyy}

\end{icmlauthorlist}

\icmlaffiliation{yyy}{Institute of Information Systems, University of Liechtenstein, Vaduz, Liechtenstein}

\icmlcorrespondingauthor{Johannes Schneider}{johannes.schneider@uni.li}

\icmlkeywords{deep learning, learning phases, compression, fitting}

\vskip 0.3in
]

\printAffiliationsAndNotice{}  

\begin{abstract}
The learning dynamics of deep neural networks are subject to controversy. Using the information bottleneck (IB) theory separate fitting and compression phases have been put forward but have since been heavily debated. We approach learning dynamics by analyzing a layer's reconstruction ability of the input and prediction performance based on the evolution of parameters during training. We show that a prototyping phase decreasing reconstruction loss initially, followed by reducing classification loss of a few samples, which increases reconstruction loss, exists under mild assumptions on the data.  Aside from providing a mathematical analysis of single layer classification networks, we also assess the behavior using common datasets and architectures from computer vision such as ResNet and VGG.
\end{abstract}




\section{Introduction}
Deep neural networks are arguably the key driver of the current boom in artificial intelligence both in academia and industry. They achieve superior performance in a variety of domains. Still, they suffer from poor understanding, which has even led to an entire branch of research, i.e., XAI\cite{mesk22}, and to widespread debates on trust in AI within society. Thus, enhancing our understanding of how deep neural networks work is arguably one of key problems in ongoing machine learning research\cite{pog20}. Unfortunately, the relatively few theoretical findings and reasonings are often subject to rich controversy. 

One debate surrounds the core of machine learning: learning behavior. Tishby and Zaslavsky \cite{tish15} leveraged the information bottleneck(IB) framework to analyze learning dynamics of neural networks. IB relies on measuring mutual information between activations of a hidden layer and the input as well as the output. A key qualitative finding was the existence of a fitting and compression phase during the training process. The information theoretic compression is conjectured a reason for good generalization performance. It is frequently discussed in the literature\cite{gei21,jak19}. For once, Tishby et al.'s findings can be considered breakthrough results in the understanding of deep neural networks. Still, they have also been subject to an extensive amount of criticism related to the validity of their findings, i.e., Saxe et al. \cite{saxe19} claimed that Tishby's claims do not generalize to common activation functions. Today, the debate is still ongoing \cite{lor21}. A key challenge is the difficulty in approximating the IB making rigorous mathematical and even empirical analysis difficult.\\

In this work, we also aim to study the learning behavior with a focus on fitting and compression capability of layers but propose a different lens for investigation. We aim to perform a rigorous analysis of a simple scenario that can be generalized rather than relying only on general statements that lack mathematical proof. 
Second, we utilize different measures from IB. Rather than measuring the information a layer provides on an output\cite{tish15}, we measure how well the layer can be used to classify samples, when a simple classifier is trained on the layer activations, i.e., we investigate linear separability of classes given layer activations.  
Rather than measuring the information of a layer with respect to the input\cite{tish15}, we measure the reconstruction error of the input given the layer by utilizing a decoder.
For single layer networks, we show that the reconstruction error is likely to decrease initially, since the network essentially learns the class average, often resembling a prototype. The error can increase later during training, if a classifier shifts from relying of non-noisy features to more noisy features to improve the loss of a few poorly classified samples. We show empirically that such a behavior is common for multiple classifiers and datasets and layers.\\
We first conduct an empirical analysis followed by a theoretical analysis, related work and conclusions.

\begin{figure}[!htb]
\centering{\centerline{\includegraphics[width=0.45\textwidth]{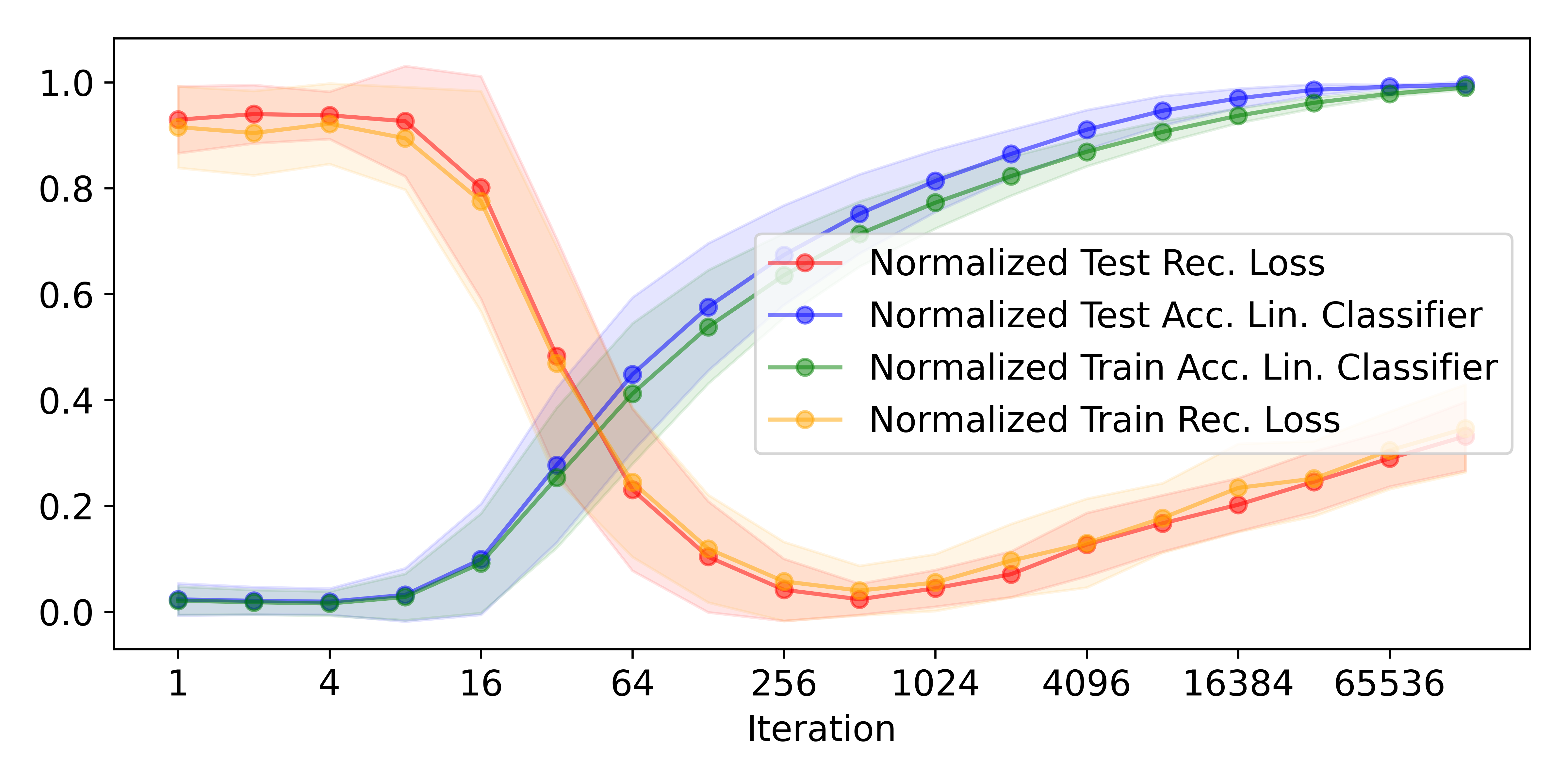}} 
\caption{Normalized accuracy and reconstruction loss for the F0 classifier and the FashionMNIST dataset} \label{fig:metF0Fa}}
\end{figure}

\begin{figure}[!htb]
\centering{\centerline{\includegraphics[width=0.35\textwidth]{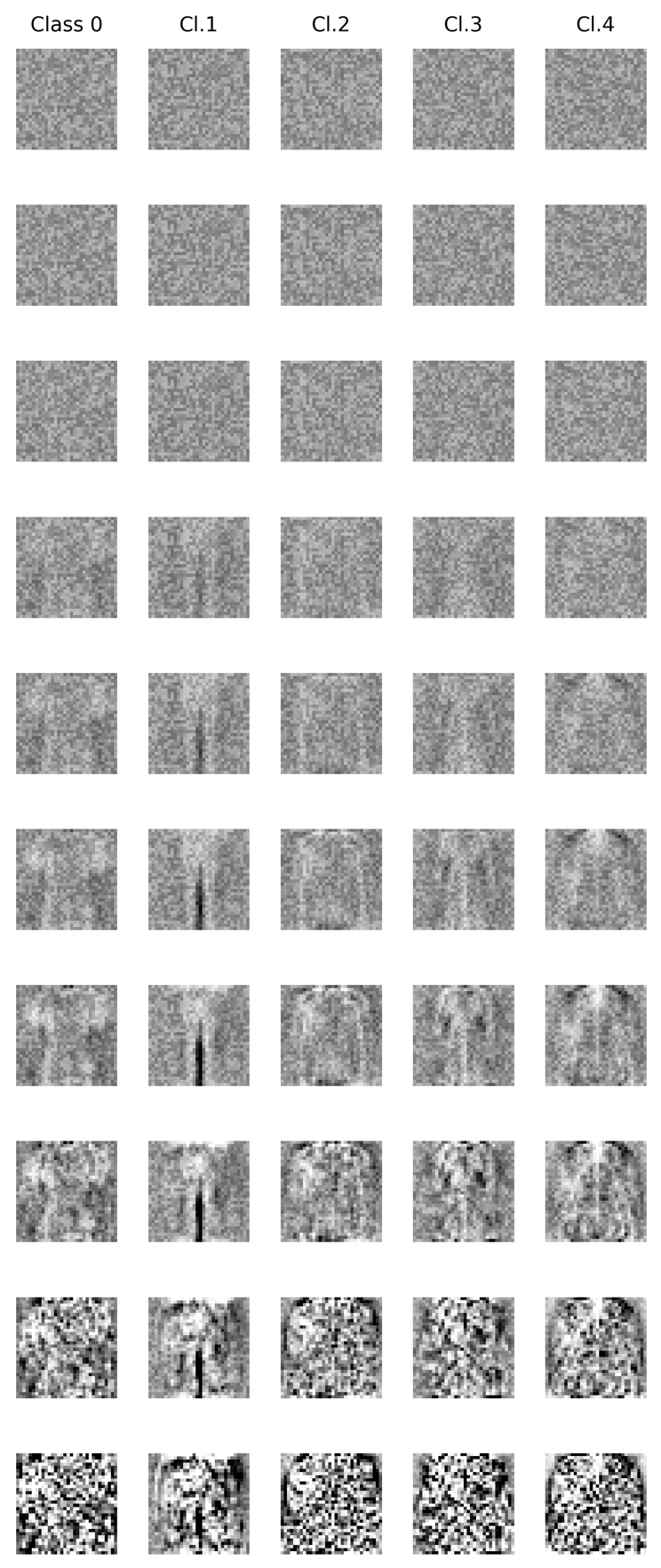}} 
\vspace{-12pt}
\caption{Weight matrices for the F0 classifier for each class across training\footnote{We showed only every 2nd to save space} }
\label{fig:weiFa}}
\vspace{-12pt}
\end{figure}

\section{Empirical analysis}
For a model $M=(L_0,L_1,...)$ consisting of a sequence of layers $L_i$, we discuss behavior of train and test accuracy for a linear classifier $CL$ trained on layer activations $L_i$ of model $M^{(t)}$ at different iterations $t$ during training. We also investigate the reconstruction loss of inputs using a decoder $DE$ to reconstruct inputs $X$ from layer activations $L_i$. 

\subsection{Measures}
We assess outputs of each layer with respect to their ability to predict the output and reconstruct the input. Intuitively, this relates to prior work\cite{tish15} that aimed to capture the amount of information on the input and the output for a given layer.
To compute our measures for a model $M^{(t)}=(L_0,L_1,...)$ trained for $t$ iterations, we train two auxiliary models, i.e. a classifier $CL$ and a decoder $DE$. To assess prediction capability $Acc^{(t)}$ at iteration $t$ of the training, we use a simple dense layer as classifier $CL$ taking as input the outputs $L(X)$ of a layer $L \in M$. In this way, we assess to what extent the outputs $L(X)$ allow us to predict the correct class without much further transformation. 

To obtain the reconstruction error $Rec^{(t)}$, we use a decoder $DE$ that takes as input the outputs $L(X)$ of a layer $L \in M$ and computes the estimate $\hat{X}$ yielding an error $||\hat{X}-X||^2$. 

Both auxiliary models are trained on $L(X)$ for all training data. The metrics $Acc^{(t)}, Rec^{(t)}$ are computed on the test data.

\begin{figure*}[!htb]
\centering{ \centerline{\includegraphics[width=0.9\textwidth]{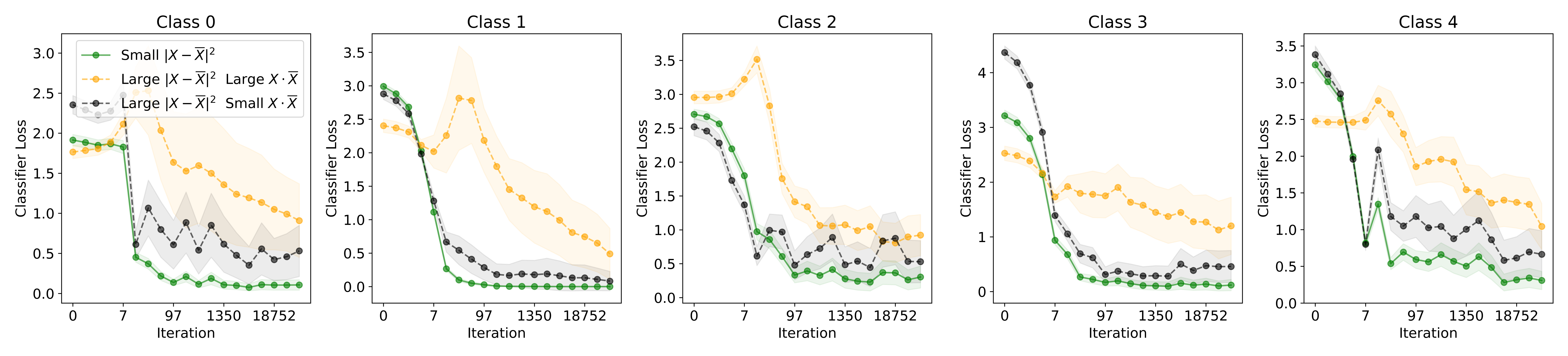}} 
\vspace{-12pt}
\caption{Loss for 1000 samples closest to the mean, samples furthest from the mean with smallest and largest dot product with the mean for FashionMNIST} \label{fig:loClFa} }
\vspace{-12pt}
\end{figure*}

\subsection{Dataset, networks and setup}
As networks for model $M$ we used VGG-11\cite{sim14}, Resnet-10\cite{he16} and fully connected networks, i.e., we employed networks $F0$ and $F1$, where the number denotes the number of hidden layers. A hidden layer has 256 neurons. After each hidden layer we applied the ReLU activation and batch-normalization. We used a fixed learning rate of 0.002 and stochastic gradient descent with batches of size 128 training for 256 epochs.  

We computed evaluation metrics $Acc^{(t)}$, $Rec^{(t)}$ at iterations $2^i$, i.e. $0,1,2,4,8...$.
For the decoder $DE$ we used the same decoder architecture as in \cite{sch21cla}, where a decoder from a (standard) auto-encoder was used.  For each computation of the metrics, we trained the decoder for 30 epochs using the Adam optimizer with learning rate of 0.0003. For the classifier $CL$ we used a single dense layer trained using SGD with fixed learning rate of 0.003 for 20 epochs.
We used CIFAR-10/100\cite{kri09}, Fashion-MNIST\cite{xia17} and MNIST, all scaled to 32x32. We trained each model $M$ 5 times. All figures show standard deviations. We report normalized metrics to better compare $Acc^{(t)}$ and $Rec^{(t)}$.

\subsection{Results}

\begin{figure*}[!htb]
\centering{
\centerline{\includegraphics[width=0.8\textwidth]{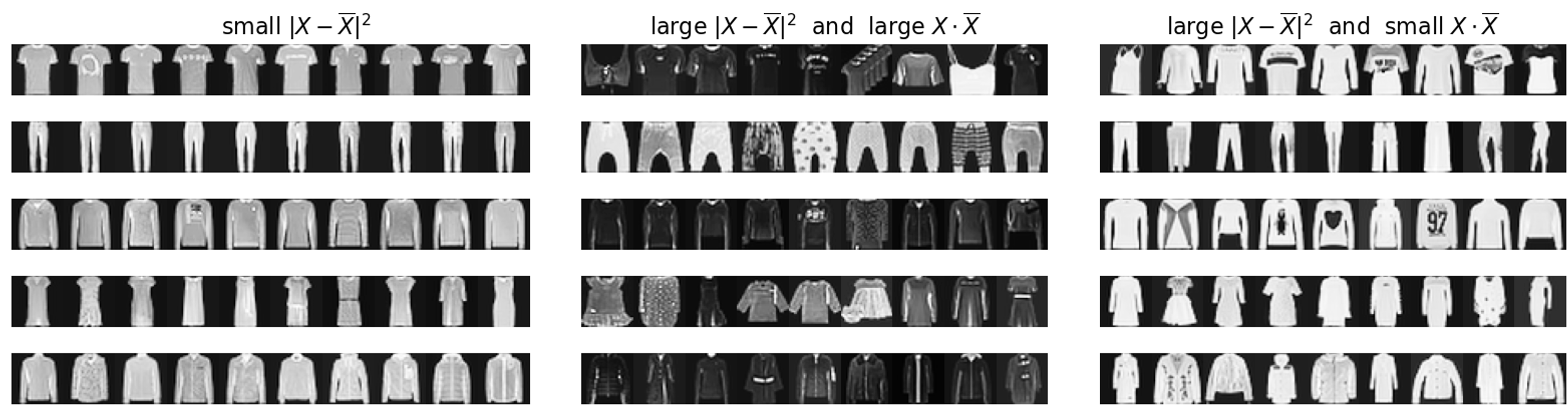}} 
\vspace{-12pt}
\caption{Samples from FashionMNIST close to mean (left), far from mean with low dot product with mean (middle) and with large dot product (right)} \label{fig:loClFaSa}}
\vspace{-12pt}
\end{figure*}

\paragraph{Single Layer networks:} We discuss results for FashionMNIST when using classifier $F0$ as model $M$ to analyze. Figure \ref{fig:metF0Fa} shows as expected that accuracy constantly increases throughout training.  Reconstruction loss remains stable for the first few iterations before decreasing and increasing towards the end. In the light of the information bottleneck theory it was interpreted as the network performing some form of fitting first (leading to a lower reconstruction loss) before compressing, leading to a higher reconstruction loss. We proclaim that first the network moves towards an average of all samples, which can resemble a prototypical class instance. The process of learning an average is well visible in the weight matrices in Figure \ref{fig:weiFa}. They change from randomly initialized matrices shown in the top row towards well-recognizable objects, e.g., the first column resembles a T-shirt and the second a pant, before worsening. Qualitatively this behavior is shown reconstructions of $DE$ (see Figure \ref{fig:rec}). Towards the end of the training, the ``prototypes'' become less recognizable. Thus, visual recognizability of the weight matrices is aligned with reconstruction loss behavior shown in Figure \ref{fig:metF0Fa}.

\begin{figure}[!htb]
	\centering{
		\centerline{\includegraphics[width=0.45\textwidth]{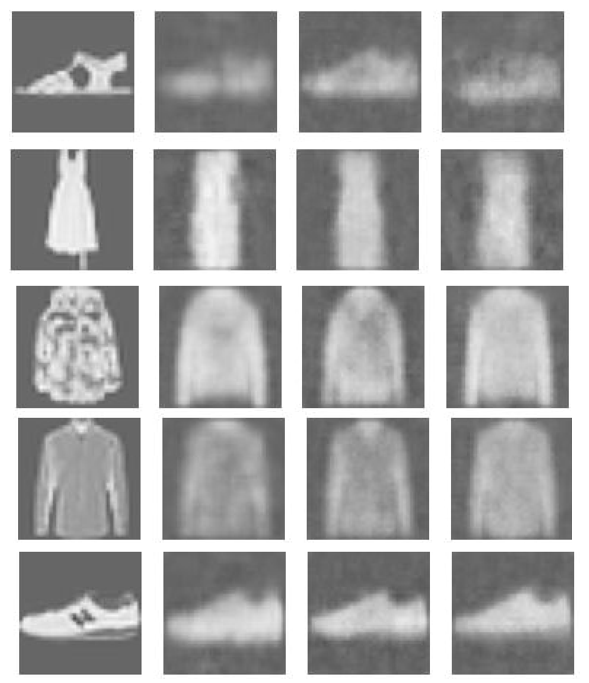}} 
		\caption{Reconstruction of samples (left column) from a decoder $DE$ trained on activations of a model $F0$ being untrained (2nd column), trained for 8 epochs (3rd column) and 256 epochs (4th column) on FashionMNIST. Reconstructions in the third column are better than those of others, which can be noted by careful comparison. Images from the first column seem to be blurrier while those in last column tend to miss the correct grey tone (as seen for jackets) and the front part of the sohle of the shoe in the bottom. They can also show inaccuraccies in shape as shown for the dress and the sandal in the top row. A detailed discussion of the interpretation of layer decodings can also be found in \cite{sch21cla}.} \label{fig:rec}}
\end{figure}

Once samples being similar to the average have low loss, weights are adjusted to correctly classify the remaining samples still having large loss. Conceptually, one can distinguish three types of inputs $X$ based on the dot product $X\cdot \overline{X}$ of a sample and the (class) average $\overline{X}$. Samples $D_S$ that have small dot product, samples $D_A$ with ``average'' dot product and samples $D_B$ with large dot product. Samples $D_A$ are samples similar to the average, i.e. $||X-\overline{X}||^2$ is relatively small for $X \in D_A$, while the difference $||X-\overline{X}||^2$ is large for $X \in (D_B \cup D_S)$. The conceptualization is supported by Figure \ref{fig:diFa} showing the distribution of dot products. Most samples are around the mode, which we denote as $D_A$ those significantly to the left and right correspond to sets $D_S$ and $d_B$. They are smaller in number.

Figure \ref{fig:loClFaSa} shows samples $D_A$ (left column) and $D_B$ (right column), which appear brighter and have similar shape to the mean. Samples $D_S$ (middle column) are darker and exhibit high loss. The loss behavior over time is shown in Figure \ref{fig:loClFa} for 1000 samples from these three sets of samples $D_A, D_B$ and $D_S$ for multiple classes. It is apparent that samples $D_S$ exhibit largest classifier loss, while those with large dot product $D_B$ exhibit lower loss. Interestingly, samples in $D_S$ also commonly show an increase of loss initially. This is expected for samples with negative dot product with the mean. Since weights move towards the average initially, small dot product implies small outputs for these samples for the correct class and in turn low probabilities.
Intuitively, one might expect that samples $D_B$ should be classified even better than those close to the average, i.e., $D_A$. However, by looking at the samples in Figure \ref{fig:loClFaSa}, it becomes apparent that a large dot product still allows for many pixels to differ substantially from the class average. For illustration, the majority of gray pixels in $\overline{X}$ are white for those $D_B$, but there might still be a considerable number of pixels that differ strongly. The overall shape might even indicate a different class, i.e., some T-Shirts appear as shirts. This is also aligned with the observation that these samples' standard deviation is high.

\begin{figure}[!htb]
\centering{
\centerline{\includegraphics[width=0.4\textwidth]{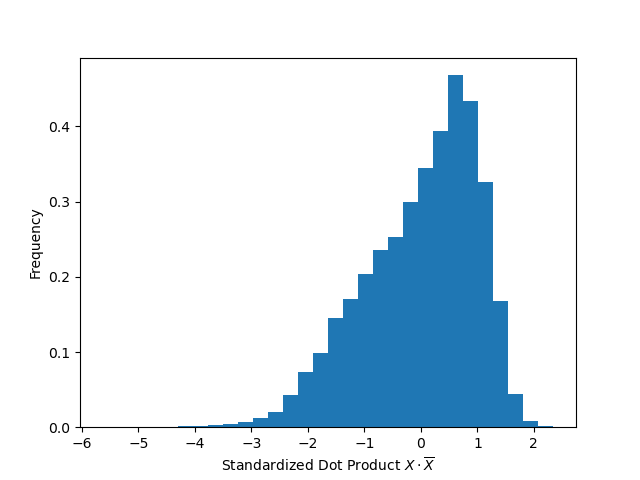}} 
\vspace{-12pt}
\caption{Distribution of the dot product of a sample and the mean, i.e. $X\cdot \overline{X}$. Dot products are standardized for each class.} \label{fig:diFa}}
\end{figure}

Fitting to incorrect samples with large deviation from the mean distorts the well-visible ``prototypes'' shown in  Figure \ref{fig:weiFa}. Distorted ``prototypes'' show more contrast and larger differences between adjacent weights. A weight is either very small (black) or very large (white), and neighboring weights often have different signs. To understand the process that leads to higher reconstruction loss, we can view the second phase of learning as fitting to a few high-loss samples.
The pixel average for easy and hard samples is about the same. In this case, the weight will increase in magnitude, e.g., in Figure \ref{fig:weiFa} a bright pixel becomes slightly brighter, and a dark pixel gets darker. This helps to more reliably classify easy samples, and it improves the loss of hard samples.
If the average of samples, i.e., $D_A$, differs a lot for a specific pixel from high loss samples $D_S$, in the phase where the loss of $D_A$ is low but still high for $D_S$, the weight is changed considerably, and a black pixel might become white and vice versa.

\paragraph{Multiple layers and networks:}
Figure \ref{fig:metMuFa} shows the outputs for the last and second last layer for multiple networks for the FashionMNIST dataset. (Additional datasets are in the Appendix). For the last layer, all networks behave qualitatively identically. For the second last layer, the overall pattern remains. Generally, for layers closer to the input, it gets weaker. 

\begin{figure}
  \centering
  \subfloat[Last Layer]{\includegraphics[width=0.35\textwidth]{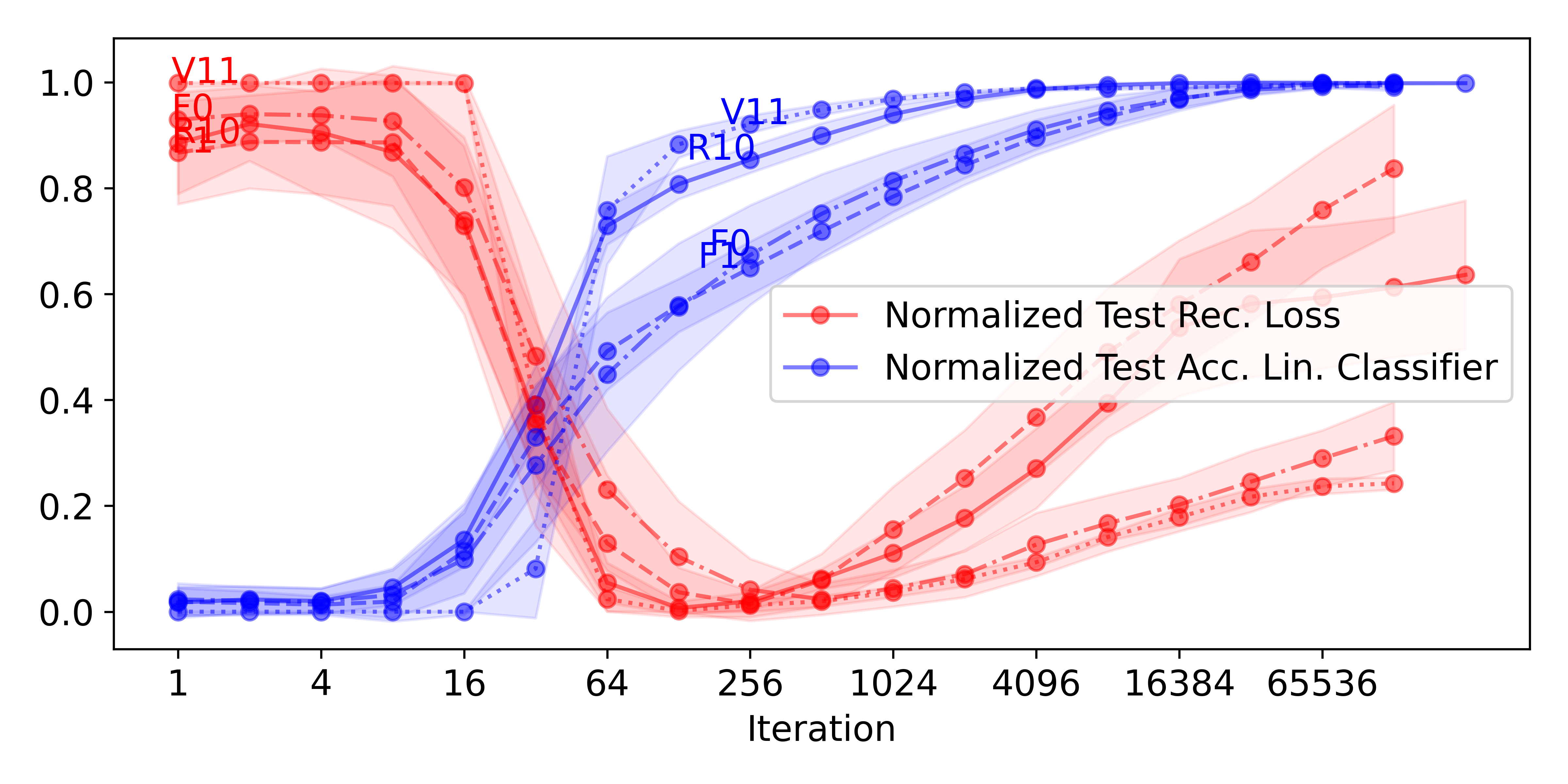} \label{fig:a}} \\
  \vspace{-12pt}
  \subfloat[Second last layer]{\includegraphics[width=0.35\textwidth]{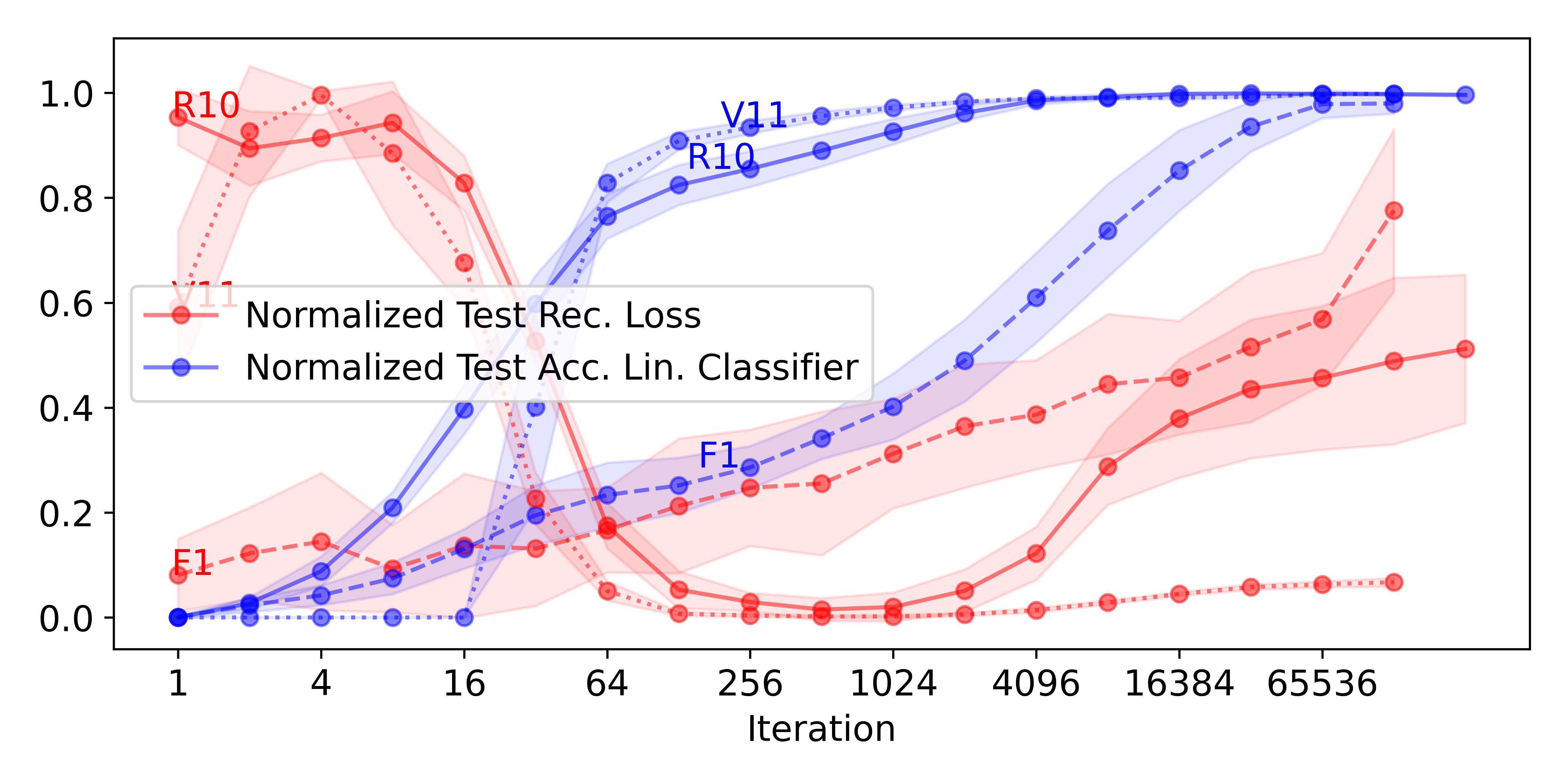} \label{fig:b}} \\
  \vspace{-6pt}
  \caption{Normalized accuracy and reconstruction loss for multiple classifiers for the FashionMNIST dataset. Other datasets are in the Appendix.} \label{fig:metMuFa}
  \vspace{-12pt}
\end{figure}

\section{Theoretical analysis}
We follow standard complexity analysis from computer science deriving bounds regarding the number of inputs $n$ assuming $n$ is large, allowing to discard lower order terms in $n$. 

\subsection{Model and Definitions}
We describe our dataset, network, loss and optimization as well as how inputs are reconstructed from layer activations. 
\paragraph{Data:}
Our data is defined to have the following data characteristics based on our empirical analysis: (i) Most samples (of a class) are similar. Still, there are a few samples that differ significantly from the majority, e.g., see Figure \ref{fig:diFa}. In particular, classification loss (at least early in training) differs significantly for a few samples, i.e., it might even increase as shown for multiple classes in Figure \ref{fig:loClFa}. (ii) Most samples can be classified (correctly) using a subset of all available attributes. This holds in particular for correlated inputs, e.g., down-sampled images still allow to train well-performing classifiers. (iii) For a class, attributes of inputs have different means and variations, e.g., making them more or less sensitive to (additive) noise. (iv) Occam's razor principle: The model should be as simple as possible to allow for rigorous analysis. \\
We focus on \emph{binary classification} where each input has two attributes. We consider a labeled dataset $D={(X,Y)}$ consisting of pairs $(X,Y)$ with input $X=(X_0,X_1)$ and label $Y \in \{0,1\}$. We denote $n =|D|$ as the number of samples. We denote $C^y=\{X |(X,Y) \in D \wedge Y=y\}$ as all inputs of class $y$. We assume balanced classes, i.e., $|C^y|=|C^{y'}|$ for arbitrary $y,y'$.

We proclaim that samples $C^y$ of class $y$ can be split into a \emph{big subset} $C^y_{k}\subset C^y$ and a \emph{small subset} $C^y_{1-k}\subset C^y$ with $k\in ]0,1[$. We define points $X=(X_0,X_1)$ as follows:

  \begin{equation}
  \footnotesize{
    X= \label{def:dat}
    \begin{cases}
      X_0=(1-2\cdot y), & \\
      X_1 =(1-2\cdot y)\cdot b+\epsilon & \text{if}\ X \in  C^y_{k}\\\
      X_0=-(1-2\cdot y),X_1=0, & \text{if}\ X \in  C^y_{1-k}\\
    \end{cases}
    }
  \end{equation}
  
, where $\epsilon \sim U(-\sigma,\sigma)$ resembles Uniform noise. 

\begin{assumption}[Data Parameters] \label{ass:initbs}
We set $b\in[0,1/\log(n)^2]$, $\sigma\in [1/\log(n)^3,1/\log(n)^4]$, $k=1/\sqrt{n}$
\end{assumption} 
Thus, a sample $X$ can be classified (correctly) using either $X_0$ or $X_1$ or both. However, attribute $X_1$ has lower (average) magnitude than $X_0$. Thus, relying (only) on $X_1$ might unavoidably lead to misclassifications in the presence of large additive noise.  Other choices for $X_0$ and $X_1$ are possible, if they ensure assumption (i). Generalization is discussed in Section \ref{sec:gen}.




\paragraph{Network: }
We use a network with a single layer with weights $W^{(t)}=(w^{(t)}_0,w^{(t)}_1)$ at iteration $t$ during training leading to a scalar output $o$. We omit the superscript $(t)$ if there are no ambiguities. 

\begin{definition}[Network Output] \label{def:out}
$o:=o(X):= W^{(t)}\cdot X=w^{(t)}_0\cdot X_0+w^{(t)}_1\cdot X_1$.
\end{definition}

\begin{assumption}[Weight Initialization] \label{ass:initw}
$|w^{(0)}_0|<1$ and $1/\log(n)<|w^{(0)}_1|<1$.
\end{assumption}
Initialization schemes \cite{he15,sch22} typically initialize weights using random values with mean zero sampled either from the uniform or Gaussian distribution with standard deviation mostly depending on the in- and out-fan. Our assumption covers all values from common uniform initialization schemes. The lower bound for $|w^{(0)}_1|$ eliminates corner cases in the analysis discussed in Section \ref{sec:gen}. 

\paragraph{Loss and optimization: }
We use the logistic function to compute class probabilities given the output $o$: 
$q=q(y=1|X)=\frac{1}{1+\exp(-o)}$, $q(y=0|X)=1-q(y=1|X)$.
The loss for a sample $X$ is given by $L(X)=y\cdot\log(q(o))+(1-y)\cdot\log(1-q(o))$. We perform gradient descent. A value of weight $w^{(i)}_j$ at iteration $i$ of gradient descent is defined as 

\begin{align}
w^{(i+1)}_j&=w^{(i)}_j-\frac{\lambda}{|D|} \sum_{X \in D}  \nabla_{w_j} L(X) \label{eq:grad}  
\end{align}
We assume a fixed learning rate of $\lambda=1/2$.

\paragraph{Reconstruction: }
To compute the reconstruction error for a sample $X$ using its output $o(X)$, we fit a linear reconstruction function $g_j$ for each input feature $X_j$ using layer activations of all training data, i.e., $\{o(X)|X \in D\}$.


\begin{definition}[Reconstruction Function] \label{def:recF}
$g_j(o):=v^0_j\cdot o+v^1_j$
\end{definition}

The reconstruction loss for an input attribute $X_j$ of an input $X=(X_0,X_1)$ is given by 
\begin{definition}[Reconstruction Loss] \label{def:recLoss}
$R_j(X):=||X_j-g_j(o(X))||^2$.
\end{definition}

\subsubsection{Prerequisites}

Before our main analysis, we derive a few basic results.
The derivative $dL/dw_i$ of the loss with respect to network parameters is
$dL/dw_i=dL/do \cdot do/dw_i=(q(o)-y)\cdot X_i$ (see e.g. Section 5.10 in \cite{jur22})

For class $y=0$ and for $X \in C^0_{k}$ using $X_0=1,X_1=b+\epsilon$ (Def. \ref{def:dat}) and for $X \in C^0_{1-k}$ with $X_0=-1,X_1=0$ we get:
\begin{align*} 
\frac{dL(X \in C^0_{k})}{dw_0}&= 1/(1+e^{-w_0-w_1(b+\epsilon)}) \\ 
\frac{dL(X \in C^0_{k})}{dw_1}&= b/(1+e^{-w_0-w_1(b+\epsilon)}) \\ 
\frac{dL(X \in C^0_{1-k})}{dw_0}&= -1/(1+e^{w_0}) \\ 
\frac{dL(X \in C^0_{1-k})}{dw_1}&=  0 
\end{align*} 


Thus, the sum of the derivatives for all samples $X\in D$ is given due to symmetry with respect to $y\in \{0,1\}$ by:
\eq{
dL/dw_i &:= \sum_{X \in D} dL(X)/dw_i= 2 \sum_{X \in C^0} dL(X)/dw_i
}
For notational ease, we subsume the factor 2 it in the learning rate, i.e. using $\lambda'=1$ instead of $\lambda=1/2$ (Eq. \ref{eq:grad}).
We get:
\eq{
dL/dw_0 &:= k/(1+e^{-w_0-w_1(b+\epsilon)}) - (1-k)/(1+e^{w_0}) \label{eq:dw0}\\
dL/dw_1 &:= kb/(1+e^{-w_0-w_1(b+\epsilon)}) \label{eq:dw1}
}

\begin{lemma}\label{le:der} 
It holds that $dL/dw_1<b$ and $dL/dw_0>\exp(-2m)/8$ with $m:=\max(|w^{(t)}_0|,|w^{(t)}_1|)$ and $m<\log(n)/4-4$.
\end{lemma}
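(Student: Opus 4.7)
The plan is to handle the two bounds separately. The bound on $dL/dw_1$ is a one-step estimate, while $dL/dw_0$ requires showing that the positive contribution from the prototypical subset dominates the negative contribution from the outliers.

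\emph{Bound on $dL/dw_1$.} The closed form $dL/dw_1 = kb/(1+e^{-w_0-w_1(b+\epsilon)})$ from Eq.~\ref{eq:dw1} factors as $b$ times the coefficient $k\le 1$ times a sigmoidal factor in $(0,1)$. Each of the last two is strictly less than one, so $dL/dw_1 < b$ follows immediately.

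\emph{Bound on $dL/dw_0$.} First I would bound the argument of the sigmoid in the positive term: using $|w_0|,|w_1|\le m$ together with $b\le 1/\log(n)^2$ and $\sigma\le 1/\log(n)^3$ from Assumption~\ref{ass:initbs}, the noise-bearing product satisfies $|w_1(b+\epsilon)|\le m(b+\sigma)\le m$ for $n$ large enough that $b+\sigma\le 1$. Hence $|w_0+w_1(b+\epsilon)|\le 2m$, and by monotonicity of the sigmoid,
\[
\frac{1}{1+e^{-w_0-w_1(b+\epsilon)}} \;\ge\; \frac{1}{1+e^{2m}} \;\ge\; \frac{e^{-2m}}{2}.
\]
Multiplying by the big-subset coefficient (which, under the parametrization consistent with $C^y_k$ being the prototypical majority, is at least $1-1/\sqrt{n}\ge 1/2$ for $n$ large) makes the positive term at least $e^{-2m}/4$. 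The negative term $(1-k)/(1+e^{w_0})$ is at most its coefficient, which in this parametrization equals the small-subset fraction $1/\sqrt{n}$. Combining gives $dL/dw_0 \ge e^{-2m}/4 - 1/\sqrt{n}$. The hypothesis $m<\log(n)/4-4$ then yields $-2m>-\log(n)/2+8$, so $e^{-2m}>e^{8}/\sqrt{n}$, and $e^{-2m}/8>e^{8}/(8\sqrt{n})\gg 1/\sqrt{n}$; subtracting delivers $dL/dw_0 > e^{-2m}/8$.

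The main obstacle I expect is conceptual rather than computational: one must reconcile the naming of $C^y_k$ (``big'') and $C^y_{1-k}$ (``small'') with the coefficient labelling in Eqs.~\ref{eq:dw0}--\ref{eq:dw1} and the value $k=1/\sqrt{n}$ in Assumption~\ref{ass:initbs}, i.e., verify that the prototypical subset contributes the $\Omega(1)$ coefficient and the outlier subset only $O(1/\sqrt{n})$. Once that parametrization is pinned down, the proof reduces to the sigmoid monotonicity estimate above plus the single logarithmic step $e^{-2m}\ge e^{8}/\sqrt{n}$.
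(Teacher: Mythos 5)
Your proof is correct and follows essentially the same route as the paper's: lower-bound the sigmoid by $e^{-2m}/4$ (you use the slightly tighter $e^{-2m}/2$), give the prototypical subset the $\Omega(1)$ coefficient and the outlier term $O(1/\sqrt{n})$, and invoke $m<\log(n)/4-4$ so that $e^{-2m}$ dominates $1/\sqrt{n}$. The parametrization issue you flag is real — the paper's own proof silently substitutes $k\mapsto 1-1/\sqrt{n}$ and $1-k\mapsto 1/\sqrt{n}$ despite Assumption~\ref{ass:initbs} stating $k=1/\sqrt{n}$ — and you resolve it exactly as the paper implicitly does.
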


\begin{proof}
For the logistic activation holds that  $1\geq 1/(1+e^{-o})>e^o/4$ for $o\leq 1$.
\footnotesize{
\begin{align*}
dL/dw_0&=\frac{k}{1+e^{-w_0-w_1(b+\epsilon)}} - \frac{1-k}{1+e^{w_0}} (\text{ Eq.  \ref{eq:dw0}})  \\
&>ke^{-2m}/4-(1-k)  \text{ with } m:=\max(|w^{(t)}_0|,|w^{(t)}_1|) \\ &\phantom{abc} \text{ since } |b+\epsilon|<1\\
&>(1-1/\sqrt{n})\exp(-2c)/4-1/\sqrt{n}\\
&>\exp(-2m)/4-2/\sqrt{n} \text{ since } m<\log(n)/4-4\\
&>\exp(-2m)/8
\end{align*}}

We use Eq. \ref{eq:dw1} and $k<1$ (Ass. \ref{ass:initbs}).
\begin{align*}
dL/dw_1&=\frac{kb}{1+e^{-w_0-w_1(b+\epsilon)}}<b
\end{align*}

\end{proof}


Next, we bound the expected reconstruction error for $E[R_j(X)]$.  
Consider a sample $X \in (C^0_{k} \cup C^1_{k})$ from a big subset of any of the two class. The reconstruction function $g_j$ (Def. \ref{def:recF}) becomes for $v^0_0=1/(w_0+w_1b)$, $v^0_1=b/(w_0+w_1b)$ and $v^1_j=0$.
\begin{align} 
g_0(o)&=v^0_0\cdot o+v^1_0=o/w_0=o\cdot 1/(w_0+w_1b) \label{eq:g0} \\ 
g_1(o)&=o/w_0=o\cdot b/(w_0+w_1b)  \label{eq:g1}
\end{align}
The expected reconstruction error for $E[R_0(X)]$ for $X_0$ and $X \in C^0_{k}$ can be estimated using $X_0=1$,$X_1=b+\epsilon$ (Def. \ref{def:dat}) yielding $o:=w_0+w_1(b+\epsilon)$ (Def. \ref{def:out}), and therefore using Equation \ref{eq:g0} $g_0(o)=1+w_1\epsilon/(w_0+w_1b)$. In turn, this gives using  Def. \ref{def:recLoss} and linearity of expectation, i.e., $E[aX+b]=aE[X]+b$ for constants $a,b$:

\begin{align} 
E[R_0(X)]&=E[(X_0-g_0(o))^2] \nonumber\\
&=E[(1-(1+\frac{w_1\epsilon}{w_0+w_1b}))^2] \nonumber\\
&=w_1^2/(w_0+w_1b)^2E[\epsilon^2]\nonumber\\
&=(w_1/(w_0+w_1b))^2\sigma^2 \label{eq:rec0} \\
E[R_1(X)]&=(w_1b/(w_0+w_1b))^2\sigma^2 \label{eq:rec1}
\end{align} 


Thus, the reconstruction error is optimal, i.e. zero, if no noise is present ($\sigma=0$). 

This leaves us to bound the reconstruction error $R_j$ for the smaller subset $X \in (C^0_{1-k} \cup C^1_{1-k})$.
We have for $X \in C^0_{1-k}$ that $X_0=-1,X_1=0$ (Def. \ref{def:dat}) and $o=-w_0$ (Def. \ref{def:out}) and using Eq. \ref{eq:g0}: $g_1(o):=-w_0/(w_0+w_1b)$
Thus, $E[(X_0-g_0(o))^2]= (-1+w_0/(w_0+w_1b))^2$
For $X \in C^0_{1-k}$ holds in the same manner $E[(X_1-g_1(o))^2]= (1-w_0/(w_0+w_1b))^2$. 

The error is not optimal.  Since the sets $C^y_{1-k}$ are very small, the total aggregated error for $X \in (C^0_{1-k} \cup C^1_{1-k})$ is small. It can be mostly neglected compared to that of $C^y_{k}$.


The reconstruction errors for subsets of class $y=1$, i.e. $X \in C^1$, are identical to those of class $C^0_{k}$ due to symmetry.

For noise $\epsilon \sim U(-\sigma,\sigma)$ with large variance $\sigma^2$, reconstruction of $X_i$ using a linear function $g_i(o)$ cannot leverage the information in $o$. It is better to neglect the computed output $o(X)$ when reconstructing $X_i$ from $o$ and simply use the mean $\overline{X_i}$, e.g. $g_0(X)=\overline{X_0}=0$, this gives error: $\sum_{X \in D} R_j(X) =\sum_{X \in D} (X_0-\overline{X_0})^2=\sum_{X \in D} X_0^2=|D|$
For $g_1(X)=\overline{X_1}=0$, this gives error: \\
$\sum_{X \in D} R_j(X) =\sum_{X \in D} (X_1-\overline{X_1})^2$\\
$=\sum_{X \in D} X_1^2 = (1-k)n\cdot b^2$ 

Let us consider the reconstruction loss for $w_0$, i.e., $w_1/(w_0+w_1b)^2\cdot \sigma^2$ (Equation \ref{eq:rec0}). It increases with $t$ as long as $\frac{w^{(t)}_1}{(w^{(t)}_0+w^{(t)}_1b)^2}>\frac{w^{(t-1)}_1}{(w^{(t-1)}_0+w^{(t-1)}_1b)^2}$. The inequality holds if the relative increase of the nominator is larger than that of the denominator, i.e.,
\eq{ 
\frac{dL/dw_1}{w^{(t)}_1} >  \big(\frac{dL/dw_0+ dL/dw_1\cdot b}{w^{(t)}_0+w^{(t)}_1b}\big)^2 \label{eq:recerr}
}

\begin{lemma}\label{le:dw0}
$\frac{dL/dw_0}{ w^{(t)}_0}  > \frac{1}{\sqrt{\log (n)}} \text { for }t<\log \log n/8$
\end{lemma}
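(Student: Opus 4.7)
The plan is to bound numerator and denominator of the ratio $dL/dw_0 \big/ w_0^{(t)}$ separately. Lemma \ref{le:der} already delivers a lower bound on $dL/dw_0$ in terms of $m^{(t)} := \max(|w_0^{(t)}|, |w_1^{(t)}|)$, so the real work is to argue that after $t < \log\log n/8$ gradient steps the weights have not drifted too far from their initial values, so that $m^{(t)}$ stays well below the admissibility threshold $\log(n)/4 - 4$ required by Lemma \ref{le:der}, and $|w_0^{(t)}|$ stays polylogarithmically small.

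First I would control the per-step displacements. From Equation \ref{eq:dw0} both sigmoid terms lie in $(0,1)$, so $|dL/dw_0| \leq 1$; Lemma \ref{le:der} itself already gives $|dL/dw_1| < b$. Using $\lambda' = 1$ and Assumption \ref{ass:initw}, a trivial induction on the update rule yields
\eq{
|w_0^{(t)}| < 1 + t, \qquad |w_1^{(t)}| < 1 + tb.
}
For $t < \log\log n/8$ and $b \leq 1/\log(n)^2$ (Assumption \ref{ass:initbs}), this gives $m^{(t)} < 1 + \log\log n/8$, which for large $n$ is far below $\log(n)/4 - 4$, so Lemma \ref{le:der} applies throughout the window.

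Substituting into Lemma \ref{le:der} and using $e^{-2 \cdot \log\log n/8} = (\log n)^{-1/4}$,
\eq{
dL/dw_0 \;>\; \frac{e^{-2 m^{(t)}}}{8} \;\geq\; \frac{e^{-2}}{8\,(\log n)^{1/4}}.
}
Dividing by the upper bound $|w_0^{(t)}| < 1 + \log\log n/8$ yields
\eq{
\frac{dL/dw_0}{|w_0^{(t)}|} \;\geq\; \frac{e^{-2}}{8\,(\log n)^{1/4}\,(1 + \log\log n/8)} \;>\; \frac{1}{\sqrt{\log n}},
}
where the last inequality holds for all sufficiently large $n$ because $(\log n)^{1/4}\,\log\log n = o(\sqrt{\log n})$.

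The main subtlety, and the only real obstacle, is one of sign: $dL/dw_0 > 0$ while $w_0^{(t)}$ could in principle pass through zero within the allowed range of $t$, making the signed ratio meaningless. The natural reading is that $w_0^{(t)}$ in the statement is taken in absolute value (i.e., the ratio represents a relative rate of change), which is also how it is used downstream in Eq.\ \ref{eq:recerr}; alternatively one restricts to an initialization where $w_0^{(0)}$ and $w_0^{(t)}$ share a sign over the window, which is consistent with Assumption \ref{ass:initw}. Once this is fixed, the calculation above is purely an asymptotic comparison and no sharper per-step bound on $|dL/dw_0|$ than the trivial $1$ is needed.
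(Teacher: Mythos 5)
Your proof follows essentially the same route as the paper's: lower-bound the numerator via Lemma \ref{le:der} combined with the drift bound $m^{(t)} < 1 + t \cdot \max(1,b)$, upper-bound the denominator by $w_0^{(t)} < 1+t$, and finish with the asymptotic comparison that $(\log n)^{1/4}\log\log n$ grows slower than $\sqrt{\log n}$. If anything you are more careful than the paper, which silently treats $w_0^{(t)}$ as positive (your sign caveat is a real gap in the original) and does not explicitly verify that $m^{(t)}$ stays within the range where Lemma \ref{le:der} applies.
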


\begin{proof}
Using Lemma \ref{le:der}  $|dL/dw_0|>\exp(-2m)/8> 1/\log(n)^{1/4}/8> 1/\log(n)^{1/3}$.

Also, $w^{(t)}_0<1+t $ since $|w^{(0)}_0|<1$ and $|dL/dw_0|\leq 1$ (Eq. \ref{eq:dw0}). Thus,
$\frac{dL/dw_0}{ w^{(t)}_0} > \frac{1/\log(n)^{1/3}}{1+\log \log n/8}>1/\sqrt{\log(n)}$
  
\end{proof}

\begin{lemma}\label{le:w1}
For $t<\log n/2$, $|w^{(t)}_1|/2\geq |w^{(0)}_1|/2\geq 1/(2\log(n))$
\end{lemma}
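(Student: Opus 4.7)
The plan is to obtain the lower bound on $|w^{(t)}_1|$ by simply tracking how much $w_1$ can possibly move per iteration and summing over the $t<\log n/2$ steps. The second inequality $|w^{(0)}_1|/2\geq 1/(2\log n)$ is immediate from Assumption \ref{ass:initw}, which guarantees $|w^{(0)}_1|>1/\log n$, so no work is needed there.

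For the first inequality, I would invoke Lemma \ref{le:der} to bound $|dL/dw_1|<b$, and then recall from Assumption \ref{ass:initbs} that $b\leq 1/\log(n)^2$. Combined with the update rule in Eq.~\ref{eq:grad} (using the absorbed learning rate $\lambda'=1$), this gives a per-step change $|w^{(i+1)}_1-w^{(i)}_1|\leq b\leq 1/\log(n)^2$. Summing over $t<\log n/2$ iterations and applying the reverse triangle inequality yields
\[
|w^{(t)}_1|\;\geq\;|w^{(0)}_1|-t\cdot b\;>\;|w^{(0)}_1|-\tfrac{\log n}{2}\cdot\tfrac{1}{\log(n)^2}\;=\;|w^{(0)}_1|-\tfrac{1}{2\log n}.
\]
Since Assumption \ref{ass:initw} gives $|w^{(0)}_1|>1/\log n$, we have $1/(2\log n)\leq |w^{(0)}_1|/2$, so the right-hand side is at least $|w^{(0)}_1|/2$, finishing the chain.

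There is essentially no obstacle here beyond reading off the correct constants: the bound is loose, and the only mild subtlety is that we must be careful that the statement being proved is really $|w^{(t)}_1|\geq |w^{(0)}_1|/2$ (the factor $1/2$ on the LHS of the displayed inequality in the lemma appears to be a typographical redundancy, since what we actually get from the triangle inequality is $|w^{(t)}_1|\geq |w^{(0)}_1|/2$, and this immediately chains with $|w^{(0)}_1|/2\geq 1/(2\log n)$). If one wanted the cleaner bound $|w^{(t)}_1|\geq |w^{(0)}_1|$, one would need a sign argument showing that $dL/dw_1$ and $w_1^{(t)}$ share a sign that makes the update push $|w_1|$ upward, which the current hypotheses do not obviously guarantee, so the half-the-initial-magnitude bound is the natural outcome. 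I would note this explicitly when writing up the proof.
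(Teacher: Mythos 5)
Your proof is correct and follows essentially the same route as the paper's: bound the per-iteration change of $w_1$ by $b \leq 1/\log(n)^2$ via Lemma~\ref{le:der} and Assumption~\ref{ass:initbs}, sum over $t < \log n/2$ steps to get $|w^{(t)}_1| \geq |w^{(0)}_1| - tb$, and finish with the initialization bound of Assumption~\ref{ass:initw}. If anything, your final chaining ($|w^{(0)}_1| - \tfrac{1}{2\log n} \geq |w^{(0)}_1|/2$ since $|w^{(0)}_1| > 1/\log n$) is tidier than the paper's, whose displayed chain ends with the inverted step $1/(2\log n) > |w^{(0)}_1|/2$; your reading of the lemma as asserting $|w^{(t)}_1| \geq |w^{(0)}_1|/2 \geq 1/(2\log n)$ is exactly what the paper's proof actually establishes.
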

\begin{proof}
Using Def. \ref{def:dat} and Eq. \ref{eq:dw1} it follows that $0<dL/dw_1<b<1/\log^2 n$.
We use that $dL/dw_1<b$ for any $t$ (Lemma \ref{le:der} giving for $t<\log n/2$:
\begin{align*}
|w^{(t)}_1|&\geq |w^{(0)}_1|-tb \\
&>1/\log(n)-t/\log^2(n) (Ass. \ref{ass:initw}, \ref{ass:initbs})\\
&>1/\log(n)-t/\log^2(n)\\
&>1/\log(n)/2\\
&>|w^{(0)}_1|/2
\end{align*}
We also have that $|w^{(0)}_1|\geq 1/\log(n)$ due to Ass. \ref{ass:initw}.
\end{proof}

\begin{lemma}\label{le:dw1}
$\frac{dL/dw_1}{ w^{(t)}_1}  < 1/\log (n) \text{ for }t<\log n/2$
\end{lemma}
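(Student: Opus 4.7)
The plan is to bound $dL/dw_1$ above and $|w^{(t)}_1|$ below, then divide. For the numerator, Eq.~\ref{eq:dw1} gives $dL/dw_1 = kb/(1+e^{-w_0-w_1(b+\epsilon)})$; since the sigmoid factor is at most $1$, this collapses to the clean upper bound $dL/dw_1 \leq kb$. Inserting $k = 1/\sqrt{n}$ and $b \leq 1/\log(n)^2$ from Assumption~\ref{ass:initbs} yields $dL/dw_1 \leq 1/(\sqrt{n}\log(n)^2)$. Using instead the coarser estimate $dL/dw_1 < b$ from Lemma~\ref{le:der} would give a constant of $2$ in front of the eventual $1/\log(n)$ in the final ratio, which is off by a factor of two, so I keep the $k$ factor.

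For the denominator, Lemma~\ref{le:w1} directly gives $|w^{(t)}_1| \geq 1/(2\log(n))$ under the hypothesis $t < \log(n)/2$, which is exactly the regime of the current statement, so no additional work is needed here.

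Dividing the two estimates produces
\eq{
\frac{dL/dw_1}{|w^{(t)}_1|} \;\leq\; \frac{1/(\sqrt{n}\log(n)^2)}{1/(2\log(n))} \;=\; \frac{2}{\sqrt{n}\log(n)},
}
which is strictly smaller than $1/\log(n)$ as soon as $\sqrt{n} > 2$, i.e. for all sufficiently large $n$. Since the paper's analysis is explicitly asymptotic in $n$ and discards lower-order terms, this settles the claim.

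There is no real obstacle here: both inequalities are already established in the preceding lemmas, and the argument is a direct substitution. The only mild point of care is the choice between the two available upper bounds on $dL/dw_1$; retaining the factor $k$ from Eq.~\ref{eq:dw1} (rather than absorbing the sigmoid factor into the bound $b$ as in Lemma~\ref{le:der}) is what buys the leading constant strictly below $1$.
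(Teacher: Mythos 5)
Your proof is correct and structurally identical to the paper's: both bound the numerator via Eq.~\ref{eq:dw1}, bound the denominator below by $1/(2\log n)$ via Lemma~\ref{le:w1}, and divide. The one substantive difference is that you retain the factor $k=1/\sqrt{n}$ in the numerator bound, $dL/dw_1 \leq kb \leq 1/(\sqrt{n}\log^2 n)$, whereas the paper drops it and uses only $dL/dw_1 < b \leq 1/\log^2 n$. This choice matters: with the paper's bound the division yields $(1/\log^2 n)\cdot(2\log n) = 2/\log n$, and the paper's concluding step asserts $2/\log n < 1/\log n$, which is false as written --- the paper's proof is off by a factor of $2$ (harmless in the asymptotic spirit of the analysis, and the statement could equally be repaired by restating the lemma with $2/\log n$). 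Your version instead gives $2/(\sqrt{n}\log n)$, which is strictly below $1/\log n$ for all $n\geq 5$, so your write-up establishes the stated strict inequality cleanly where the paper's own computation does so only up to a constant. In short: same route, but your retention of $k$ is a genuine (and needed) tightening, exactly as you flag in your final remark.
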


\begin{proof}
Using Def. \ref{def:dat} and Eq. \ref{eq:dw1} it follows that $0<dL/dw_1<b\leq 1/\log^2 n$.
Using Lemma \ref{le:w1} we have that  $\frac{dL/dw_1}{ w^{(t)}_1}  < 1/\log^2 n\cdot (2\log(n))< 1/\log(n)$ 
\end{proof}

\subsection{Learning phases}


\begin{theorem} \label{thm:init}
After constant iterations $t\leq 1500$, it holds that $w^{(t)}_0\geq 2$. If $w^{(t')}_0>1\sqrt{\log(n)}$  the reconstruction error decreases for any $t'<1500$.
\end{theorem}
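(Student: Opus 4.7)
The plan is to handle the two assertions of the theorem separately. The first asks for a constant upper bound on how many iterations are needed for $w_0^{(t)}$ to reach $2$; the second claims the reconstruction error from Eq. \ref{eq:rec0} is decreasing at any step $t'<1500$ at which $w_0^{(t')}$ has cleared the stated threshold.

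For the first assertion, I would iterate the gradient lower bound in Lemma \ref{le:der}. Assumption \ref{ass:initw} gives $|w_0^{(0)}|,|w_1^{(0)}|<1$, so $m^{(0)}<1$, and Lemma \ref{le:der} yields $|dL/dw_0|>\exp(-2m)/8$. I would carry an induction that $m^{(t)}\le 3$ throughout the window: while this holds, every gradient step shifts $|w_0|$ by at least $\exp(-6)/16$, and the companion bound $|dL/dw_1|<b\le 1/\log^2 n$ (via Assumption \ref{ass:initbs}) shows $|w_1|$ drifts by at most $1/(2\log^2 n)$ per step, which is $o(1)$ over any constant window. Consequently $m$ is governed by $|w_0|$, and after at most $O(e^6)$ updates, well below $1500$ and far below Lemma \ref{le:der}'s $\log(n)/4-4$ cap, $|w_0|$ has grown from below $1$ to at least $2$.

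For the second assertion, Eq. \ref{eq:rec0} writes the big-subset reconstruction loss as $\sigma^2\,w_1^2/(w_0+w_1 b)^2$. A short differentiation shows it decreases at step $t'$ iff the relative per-step change of $w_0+w_1 b$ exceeds the relative per-step change of $w_1$. Under the hypothesis $w_0^{(t')}>1/\sqrt{\log n}$ (the natural reading of the printed expression) and the Assumption-\ref{ass:initbs} bound $w_1 b\le 1/\log^3 n$, the denominator is dominated by $w_0$, so the condition reduces (up to $o(1)$ corrections) to comparing the relative changes of $w_0$ and $w_1$ alone. Lemma \ref{le:dw0} gives the former to be at least $1/\sqrt{\log n}$, while Lemma \ref{le:dw1} caps the latter at $1/\log n$; the polylogarithmic gap comfortably dominates the $w_1 b$ correction. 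The hypothesis $t'<1500$ sits inside both lemmas' validity windows ($t<\log\log n/8$ and $t<\log n/2$).

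The main technical obstacle I expect is bookkeeping on signs and on the correction from the small subsets $C^y_{1-k}$. Lemma \ref{le:der} is stated as a one-sided bound, so I must verify the net push on $w_0$ is in a consistent direction across the full initial window so that $w_0$, not merely $|w_0|$, reaches $2$; this comes down to inspecting which of the two terms in Eq. \ref{eq:dw0} dominates and is fiddly but routine. Separately, the small subsets contribute only an additive $O(k\cdot n)=O(\sqrt n)$ term to the aggregate reconstruction loss as noted just before Lemma \ref{le:der}, which is $o(n)$ and so cannot reverse the sign of the per-step change; this requires only a brief verification.
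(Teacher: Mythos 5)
Your plan for the first assertion follows the paper's route (iterate the lower bound of Lemma \ref{le:der} while the weights stay in a bounded window), but your constants do not deliver the statement actually being proved, and the constant is the content here: the theorem claims the specific bound $1500$. With your induction hypothesis $m^{(t)}\le 3$ and per-step increment $\exp(-6)/16$, moving $w_0$ through the worst-case displacement of $3$ (from $w_0^{(0)}=-1$ to $2$) needs up to $3\cdot 16\, e^{6}\approx 1.9\times 10^{4}$ iterations, an order of magnitude more than $1500$; the remark that ``$O(e^6)$ updates'' is ``well below $1500$'' is exactly where the hidden constant bites. Two tightenings recover the paper's count: first, in the window before $w_0$ first reaches $2$ one has $-1<w_0^{(t)}<2$ and $|w_1^{(t)}|<1+tb<2$, so $m\le 2$ (not $3$), giving $dL/dw_0>\exp(-4)/8>0.002$ via Lemma \ref{le:der}; second, there is no extra halving, since the paper absorbs the class-symmetry factor $2$ into the learning rate ($\lambda'=1$), making the effective step exactly $dL/dw_0$. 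Then $3/0.002=1500$ falls out. Relatedly, tracking $|w_0|$ is the wrong frame: Lemma \ref{le:der} is one-signed, so $w_0$ itself increases monotonically, and from $w_0^{(0)}=-1$ the quantity $|w_0|$ first \emph{decreases}; your closing paragraph flags this but treats it as cosmetic when it is what makes ``shift by $3$'' the correct accounting.

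For the second assertion you take a genuinely different (and in one respect better) route than the paper. You use the exact ratio-monotonicity criterion --- the error $\sigma^2 w_1^2/(w_0+w_1b)^2$ of Eq.~\ref{eq:rec0} decreases iff the relative increase of $w_0+w_1b$ exceeds that of $w_1$ --- and then invoke Lemmas \ref{le:dw0} and \ref{le:dw1} to get the gap $1/\sqrt{\log n}$ versus $1/\log n$. The paper instead plugs $|dL/dw_1|\le b$, $dL/dw_0>0.002$ and Lemma \ref{le:w1} directly into Eq.~\ref{eq:recerr}, whose right-hand side is the \emph{square} of the denominator's relative change, and closes by noting the left side is $O(1/\log n)$ while the right side is a constant. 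Your unsquared comparison is the correct condition (exact, for positive one-step updates), so your argument is sound; but note that importing Lemma \ref{le:dw0} restricts you to $t<\log\log n/8$, so your claim that $t'<1500$ ``sits inside the validity window'' holds only for astronomically large $n$, whereas the paper's constant bound $dL/dw_0>0.002$ needs only $m\le 2$. Finally, your dismissal of the small subsets compares their $O(\sqrt n)$ contribution against $o(n)$; the right yardstick is the big-subset total, which is $n\sigma^2(w_1/(w_0+w_1b))^2=n/\mathrm{poly}\log n$ rather than $\Theta(n)$ --- the conclusion survives, but as written the comparison is off.
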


\begin{proof}
By Ass. \ref{ass:initw} we have $|w^{(0)}_0|<1$ and $1/\log(N)<|w^{(0)}_1|<1$. Thus, after $t\geq 1500$ iterations for weights holds $|w^{(t)}_0|<1+t$, since  $|dL/dw_0|\leq 1$ (Eq. \ref{eq:dw0}) and  $w^{(t)}_1<1+tb<2$ since $|w^{(0)}_1|<1$ and the change $|dL/dw_1|\leq b$ (Eq. \ref{eq:dw1}).

We assume that $w^{(0)}_0=-1$, since this requires the largest changes, i.e. most iterations, to reach $w^{(t)}_0\geq 2$. 
Thus, for $-1<w^{(t)}_0<2$ and $w^{(t)}_1<1+tb<2$ using Lemma \ref{le:der} with $m\leq 2$ we get that $dL/dw_0>\exp(-4)/8>0.002$. We upper bound $w^{(t)}_1$ using Lemma \ref{le:w1}.


For $|w^{(t)}_0+w^{(t)}_1b|>1/\sqrt{\log(n)}$, which holds for any $w^{(t)}_0>2/\sqrt{\log(n)}$ since $|w^{(t)}_1b|<1/\log(n)^2$ and $dL/dw_0>0$.

The reconstruction error decreases for $|w^{(t)}_0|>1/\sqrt{\log(n)}$, i.e. analogous to Eq. \ref{eq:recerr} holds that 
$dL/dw_1  / w^{(t)}_1 <  (dL/dw_0+bdL/dw_1)^2  / (w^{(t)}_0+w^{(t)}_1b)^2$ plugging in prior upper bounds for $|dL/dw_1|\leq b$ and lower bounds on $dL/dw_0+dL/dw_1>0.002+b>0.002$ and $w^{(t)}_1$ (Lemma \ref{le:w1}). Note that $|w^{(t)}_0+w^{(t)}_1b|>1/\sqrt{\log(n)}$ for $w^{(t)}_0>1/\sqrt{\log(n)}$ 
$w^{(t)}_1<1+tb<2$

Thus, to shift $w_0$ by $3$ requires at most $3/0.002=1500$ iterations. Using Lemma \ref{le:dw0} $dL/dw_0>0$ for $-1<w^{(t)}_0<2$. Furthermore, 
\end{proof}
Technically, in case $w^{(t)}_0+w^{(t)}_1b\approx 0$ the reconstruction error might also increase. However, since changes to $w_0$ are large and $w_1b$ is roughly constant, this might not necessarily happen if  $w^{(t)}_0+w^{(t)}_1b$ changes from being negative to positive. It depends on the exact initialization of $w_0$.

Next, we investigate the learning behavior after the first iterations, i.e. once $w^{(t)}_0>2$.
\begin{theorem}
For iterations $t\in[1500,\log(n) /8]$ the reconstruction error will decrease. For $t>t_0$ for some $t_0 > \log(n)/8$ it will increase again.
\end{theorem}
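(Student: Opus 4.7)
The plan exploits the structural relation $dL/dw_1 \approx b \cdot dL/dw_0$ implicit in Eqs.~\ref{eq:dw0}--\ref{eq:dw1}, valid whenever the $(1-k)/(1+e^{w_0})$ summand in $dL/dw_0$ is dominated by the $k/(1+e^{-o})$ summand (equivalently, when $o$ is not too negative relative to the eventual equilibrium at roughly $\log n/2$). Combined with the evolution criterion of Eq.~\ref{eq:recerr} for the reconstruction error $(w_1/(w_0+w_1b))^2$ from Eq.~\ref{eq:rec0}, this reduces both phases to comparing $|w_1^{(t)}|$ against $b|w_0^{(t)}|$. I start from Theorem~\ref{thm:init}'s endpoint at $t=1500$, where $|w_0^{(1500)}|\geq 2$ and $|w_1^{(1500)}|$ is still pinned close to $|w_1^{(0)}|$ by Lemma~\ref{le:w1}.

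For the decrease phase $t\in[1500,\log n/8]$, I would first verify that Lemma~\ref{le:der} applies throughout: since $|w_0^{(t)}|\leq 1+t\leq \log n/8+1$ and $|w_1^{(t)}|\leq 1+tb\leq 2$, we have $m\leq \log n/4-4$ for large $n$, so $dL/dw_0$ retains a single sign and $|w_0^{(t)}|$ grows monotonically. Lemma~\ref{le:w1} pins $|w_1^{(t)}|\geq 1/(2\log n)$, and combining $b\leq 1/\log^2 n$ (Ass.~\ref{ass:initbs}) with $|w_0^{(t)}|\leq \log n/8$ yields $b|w_0^{(t)}|\leq 1/(8\log n) < 1/(2\log n) \leq |w_1^{(t)}|$. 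Plugging the structural relation and these bounds into the evolution criterion of Eq.~\ref{eq:recerr}, using Lemma~\ref{le:dw1} to bound the relative growth rate of $w_1$, shows that the growth of the denominator $|w_0^{(t)}+w_1^{(t)}b|$ strictly dominates that of the numerator $|w_1^{(t)}|$, so the ratio and hence the error decreases throughout the interval.

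For the increase phase, I would analyze the asymptotic equilibrium of $w_0^{(t)}$: solving $dL/dw_0=0$ with $k$ close to $1-1/\sqrt n$ (from Ass.~\ref{ass:initbs} and the proof of Lemma~\ref{le:der}) yields $|w_0^\ast|\approx \log n/2$. Near this equilibrium the two summands of $dL/dw_0$ become comparable, forcing $dL/dw_0\to 0$, while $dL/dw_1$ stays $\Theta(b/\sqrt n)>0$, so $w_0^{(t)}$ essentially freezes while $w_1^{(t)}$ keeps drifting monotonically (as $dL/dw_1>0$ at every step). Eventually $|w_1^{(t)}|$ either decreases past the threshold $b|w_0^{(t)}|\asymp b\log n$ (if its sign carries it toward zero) or $|w_0^{(t)}|$ stalls enough that the denominator $|w_0^{(t)}+w_1^{(t)}b|$ no longer outpaces the numerator, flipping the criterion and yielding the desired $t_0>\log n/8$. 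The main obstacle is the regime near saturation, where the clean structural relation $dL/dw_1\approx b\cdot dL/dw_0$ breaks down and one must fall back on the exact expressions of Eqs.~\ref{eq:dw0}--\ref{eq:dw1}; additionally, the possibility that $w_1^{(t)}$ crosses zero under some initializations deserves separate accounting in pinning down $t_0$ precisely.
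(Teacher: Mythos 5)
Your proposal is correct in substance and reaches both conclusions, but your decrease-phase argument takes a genuinely different route from the paper's. The paper proves the decrease on $[1500,\log(n)/8]$ by iteration counting: it shows that moving $w_0$ by one unit costs about $8\exp(2w_0+2)$ steps, sums these increments to get $w_0^{(t)}\approx\log(t)/2+O(1)$, and thus concludes that at $t=\log(n)/8$ the weight $w_0$ is still only $O(\log\log n)$, far from saturation --- although it never explicitly substitutes this growth bound back into the criterion of Eq.~\ref{eq:recerr}, so its own Phase-1 argument ends without closing that loop. You instead exploit the proportionality $dL/dw_1\approx b\cdot dL/dw_0$, which is valid precisely because the crude bound $|w_0^{(t)}|\le 1+t\le\log(n)/8+1$ keeps the minority term $(1-k)/(1+e^{w_0})$ negligible against $k/(1+e^{-o})$; the common sigmoid factor then cancels from both sides of the evolution criterion, reducing the decrease condition to the static comparison $b|w_0^{(t)}|<|w_1^{(t)}|$, verified via $b|w_0^{(t)}|\le 1/(8\log n)<1/(2\log n)\le |w_1^{(t)}|$ (Assumption~\ref{ass:initbs} and Lemma~\ref{le:w1}). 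This buys two things: you need no lower bound on the magnitude of $dL/dw_0$ (only its sign, from Lemma~\ref{le:der}), and you actually complete the reduction to Eq.~\ref{eq:recerr} that the paper leaves implicit. One caution: your appeal to Lemma~\ref{le:dw1} is not what carries the step --- near $t=\log(n)/8$ the gradient $dL/dw_0$ can be as small as polynomially inverse in $n^{1/4}$, so comparing $1/\log n$ against the right-hand side of the criterion directly would fail; it is the cancellation from your structural relation that does the work, so the lemma should be treated as auxiliary. For the increase phase your argument and the paper's are the same mechanism at the same informal level: the two summands of $dL/dw_0$ eventually balance so $w_0$ stalls (you quantify the equilibrium at $|w_0^\ast|\approx\log(n)/2$ and the persistent drift $dL/dw_1=\Theta(b/\sqrt{n})$, which the paper's one-sentence argument omits), while $w_1$ keeps moving in a fixed direction and flips the criterion; both treatments leave $t_0$ unpinned and defer the $w_1\approx 0$ sign-crossing corner case, which the paper addresses only in Section~\ref{sec:gen}.
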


\begin{proof}
Using Theorem \ref{thm:init} at $c=1500$ iterations $w^{(c)}_0>1$ and the reconstruction error decreases.
We proceed by showing that it still decreases at iteration $t=\log \log n/8$. 


Using Lemma \ref{le:der} $dL/dw_0>\exp(-2m)/8$ for $m<\log (n)/4-4$, i.e. $w^{(t)}_0$ increases for $t \in [0,\log n/8]$ and, still, $w^{(t)}_0>2$.

In contrast $|w^{(t)}_1|<1$, changes by at most $b$ after $t\cdot b <1\log N$ thus $|w^{(t)}_1|<2$.
Thus, $m=\max(w^{(t)}_0,w^{(t)}_1)=w^{(t)}_0$
Let us bound the number of iterations $i$ until $w^{(t')}_0$ changes by 1, i.e.,
$w^{(t'+i)}_0\geq 1+w^{(t')}_0$
For any $0<j<i$ holds
$dL/dw_0>\exp(-2w^{(t'+j)}_0)/8>\exp(-2w^{(t'+i)}_0)/8 >\exp(-2w^{(t')}_0-2)/8$ 
Thus, to compute the number of iterations $i$ to change $w^{(t)}_0$ by 1 we use:
\begin{align*}
i\exp(-2w^{(t)}_0-2)/8=1\\ 
i=8\exp(2w^{(t)}_0+2)
\end{align*}
Thus, given a total of $t\leq \log(n)/8$ iterations, we have for the final weight $w'$:
\begin{align*}
&\sum_{j<w'} 8\exp(2j+2) = t\\
&8\exp(2w'+3) \geq t \text{ Using } \sum_{u<x} 2^u \leq 2^{u+1}\\
&2w'+3+\log 8 \geq \log(t)\\
&2w'+6 \geq \log(t)\\
&w' \geq \log(t)/2-6  
\end{align*}
Using $w^{(0)}_0\geq -1$, we get that $w^{(t)}_0\geq -\log(t)/2-7 = -\log \log(n)/2-9$ for $t=\log(n)/8$.

Comparing the two terms $k/(1+exp(-w_0-w_1b))$, $(1-k)/(1+exp(w_0))$ for $dL/dw_0$ from Lemma \ref{le:dw0}, it can be seen that the decrease of $w_0$ must end, i.e. $dL/dw_0$ changes signs, since $w^(t)_1$ can only increase and the first term $k/(1+exp(-w_0-w_1b))$ tends to 0, while the second $(1-k)/(1+exp(w_0))$ tends to $1-k$.
\end{proof}

\subsection{Discussion} \label{sec:gen}
A few samples can cause a large shift of attribute weights, undoing changes performed in reducing classification loss for the majority of samples. This can also reduce robustness of classification due to additive noise if the classifier relies on attributes being not very discriminative.  A shift of weights also impacts reconstruction loss, i.e., if noise of attributes gets amplified due to multiplication with larger weights reconstruction errors increase. 

Using the bounds for the derivatives of $w_0$ and $w_1$ (Equation \ref{eq:dw0} and \ref{eq:dw1}, it also becomes apparent that during initial training, i.e. as long as $w_0$ is small, we essentially add in each iteration the mean vector of the large subset $C^1_{k}$ to the existing weights. Thereby, weights $W=(w_0,w_1)$ point more and more towards the class mean $\overline{X}$ for $X \in C^1_{k}$.

We have assumed a lower bound of $1/\log(n)$ for the absolute value of the initialized weight $|w^{(0)}_1|$. In our analysis, it was needed to ensure that  $\frac{dL/dw1} /w1$ remains small (essentially constant) while $w_0$ changes.
If this does not hold, i.e., $|w^{(t)}_1|\approx 0$, the reconstruction error could be 0 (assuming $w_0\neq 0$). Thus, in principle, in a phase where we claim that the reconstruction error increases, it might decrease (at least for some iterations) under the condition that $w_1$ was initialized with a value of very small magnitude.

In our analysis, we were unspecific of what happens if $w^{0}_0<1/\sqrt{n}$. We have shown in Theorem \ref{thm:init} that $w_0$ changes signs if it is negative initially, but not what happens for the reconstruction error. In our reconstruction function, the coefficient $c^0_0:=1/(w_0+w_1\cdot b)$ might become unbounded for $w^{0}_0<1/\sqrt{n}$. Note that $w_0+w_1\cdot b=0$ implies that the output is zero for inputs of the large subsets of both classes. Thus, the outputs are of no value for reconstruction, and we might use the mean for reconstruction, i.e., use $c^0_0=0$. It also means that it might be possible that early the reconstruction error initially increases before decreasing but whether this happens depends on the exact value of $w^{0}_0$, i.e., it is not sure to happen just because $w^{0}_0<1/\sqrt{n}$.




\section{Related Work}
\cite{tish15} proposed both the information bottleneck \cite{tish00} and its usage for analysis of deep learning. It suggests a principled way to ``find a maximally compressed mapping of the input variable that preserves as much as possible the information on the output variable''\cite{tish15}. To this end, they view layers $h_i$ of a network as a Markov chain for which holds given $i\geq j$ using the data processing inequality: $$I(Y;X) \geq I(Y;h_j) \geq I(Y;h_i) \geq I(Y;\hat{Y})$$ 
They view learning as the process that maximizes $I(Y;h_i)$ while minimizing $I(h_{i-1};h_i)$, where the latter can be interpreted as the minimal description length of the layer. In our view, we at least on a qualitative level agree on the former, but we do not see minimizing the description length as a goal of learning. In our perspective and also in our framework, it can be a consequence of the first objective, i.e. to discriminate among classes, and existing learning algorithms, i.e., gradient descent. From a generalization perspective, it seems preferable to cling onto even the smallest bit of information of the input $X$, even if its highly redundant and, as long as it \emph{could} be useful for classification. This statement is also supported by \cite{saxe19} who show that compression is not necessary for generalization behavior and that fitting and compression happen in parallel rather than sequentially. A recent review \cite{gei21} also concludes that the absence of compression is more likely to hold. This is aligned with our findings, since in our analysis, ``information loss'' on the input is a consequence of weighing input attributes too much that are sensitive to additive noise. In contrast to our work, their analysis is within the IB framework. Still, it remedies an assumption of \cite{tish15} namely \cite{saxe19} investigates different non-linearities, i.e., the more common ReLU activations rather than sigmoid activations. Recently, \cite{lor21} argues that compression is only observed consistently in the output layer. The IB framework has also been used to show that neural networks must lose information \cite{liu20} irrespective of the data it is trained on. From our perspective the alleged information loss measured in terms of the reconstruction capability could be minimal at best. In particular, it is evident that at least initially reconstruction is almost perfect for wide networks following theory on random projection, i.e., the Johnson-Lindenstrauss Lemma\cite{joh84} proved that random projections allow embedding $n$ points into an $O(\log n/\epsilon^2)$ dimensional space while preserving distances within a factor of $1\pm \epsilon$. This bound is also tight according to \cite{lar17} and easily be extended to cases where we apply non-linearities, i.e., $ReLU$. \\
\cite{gei21} also discussed the idea of geometric compression based on prior works on IB analysis. However, the literature was inconclusive according to \cite{gei21} on whether compression occurs due to scaling or clustering. Our analysis is inherently geometry (rather than information) focused, i.e., we measure learning based on how well classes can be separated using linear separators in learnt latent spaces. That is, our work favors class-specific clustering as set forth briefly in \cite{gol18}, but we derive it not using the IB framework.\\
As the IB has also been applied to other types of tasks, i.e., autoencoding \cite{tap20}, we believe that our approach might also be extended to such tasks.\\

The idea to reconstruct inputs from layer activations has been outlined in the context of XAI \cite{sch21cla}. The idea is to compare reconstructions using a decoder with original inputs to assess what information (or concepts) are ``maintained'' in a model. Our work also touches upon linear decoders that have been studied extensively, e.g., \cite{kun19}. It also estimates reconstruction errors from noisy inputs $X_i+\epsilon$ \cite{car09}.

\section{Conclusions}
Theory of deep learning is limited. This work focused on a very pressing problem, i.e., understanding the learning process. To this end, it rigorously analyzed a simple dataset modeling many observations of common datasets. Our results highlight that few samples are likely to profoundly impact weights in later stages of the training, potentially compromising classifier robustness.

\bibliography{refs}

\begin{thebibliography}{23}
\providecommand{\natexlab}[1]{#1}
\providecommand{\url}[1]{\texttt{#1}}
\expandafter\ifx\csname urlstyle\endcsname\relax
  \providecommand{\doi}[1]{doi: #1}\else
  \providecommand{\doi}{doi: \begingroup \urlstyle{rm}\Url}\fi

\bibitem[Carroll et~al.(2009)Carroll, Delaigle, and Hall]{car09}
Carroll, R.~J., Delaigle, A., and Hall, P.
\newblock Nonparametric prediction in measurement error models.
\newblock \emph{Journal of the American Statistical Association}, 104\penalty0
  (487):\penalty0 993--1003, 2009.

\bibitem[Geiger(2021)]{gei21}
Geiger, B.~C.
\newblock On information plane analyses of neural network classifiers--a
  review.
\newblock \emph{IEEE Transactions on Neural Networks and Learning Systems},
  2021.

\bibitem[Goldfeld et~al.(2018)Goldfeld, Berg, Greenewald, Melnyk, Nguyen,
  Kingsbury, and Polyanskiy]{gol18}
Goldfeld, Z., Berg, E. v.~d., Greenewald, K., Melnyk, I., Nguyen, N.,
  Kingsbury, B., and Polyanskiy, Y.
\newblock Estimating information flow in deep neural networks.
\newblock \emph{arXiv preprint arXiv:1810.05728}, 2018.

\bibitem[He et~al.(2015)He, Zhang, Ren, and Sun]{he15}
He, K., Zhang, X., Ren, S., and Sun, J.
\newblock Delving deep into rectifiers: Surpassing human-level performance on
  imagenet classification.
\newblock In \emph{Proc. of the international conference on computer vision},
  pp.\  1026--1034, 2015.

\bibitem[He et~al.(2016)He, Zhang, Ren, and Sun]{he16}
He, K., Zhang, X., Ren, S., and Sun, J.
\newblock Deep residual learning for image recognition.
\newblock In \emph{Conference on computer vision and pattern recognition
  (CVPR)}, pp.\  770--778, 2016.

\bibitem[Jakubovitz et~al.(2019)Jakubovitz, Giryes, and Rodrigues]{jak19}
Jakubovitz, D., Giryes, R., and Rodrigues, M.~R.
\newblock Generalization error in deep learning.
\newblock In \emph{Compressed Sensing and Its Applications}, pp.\  153--193.
  2019.

\bibitem[Johnson \& Lindenstrauss(1984)Johnson and Lindenstrauss]{joh84}
Johnson, W.~B. and Lindenstrauss, J.
\newblock Extensions of lipschitz mappings into a hilbert space.
\newblock \emph{Contemporary mathematics}, 26, 1984.

\bibitem[Jurafsky \& Martin(2021)Jurafsky and Martin]{jur22}
Jurafsky, D. and Martin, J.~H.
\newblock Speech and language processing.
\newblock \emph{Draft of 3rd edition}, 2021.

\bibitem[Krizhevsky \& Hinton(2009)Krizhevsky and Hinton]{kri09}
Krizhevsky, A. and Hinton, G.
\newblock Learning multiple layers of features from tiny images.
\newblock Technical report, 2009.

\bibitem[Kunin et~al.(2019)Kunin, Bloom, Goeva, and Seed]{kun19}
Kunin, D., Bloom, J., Goeva, A., and Seed, C.
\newblock Loss landscapes of regularized linear autoencoders.
\newblock In \emph{International Conference on Machine Learning}, pp.\
  3560--3569, 2019.

\bibitem[Larsen \& Nelson(2017)Larsen and Nelson]{lar17}
Larsen, K.~G. and Nelson, J.
\newblock Optimality of the johnson-lindenstrauss lemma.
\newblock In \emph{2017 IEEE 58th Annual Symposium on Foundations of Computer
  Science (FOCS)}, pp.\  633--638. IEEE, 2017.

\bibitem[Liu et~al.(2020)Liu, Qin, Anwar, Caldwell, and Gedeon]{liu20}
Liu, Y., Qin, Z., Anwar, S., Caldwell, S., and Gedeon, T.
\newblock Are deep neural architectures losing information? invertibility is
  indispensable.
\newblock In \emph{International Conference on Neural Information Processing},
  pp.\  172--184. Springer, 2020.

\bibitem[Lorenzen et~al.(2021)Lorenzen, Igel, and Nielsen]{lor21}
Lorenzen, S.~S., Igel, C., and Nielsen, M.
\newblock Information bottleneck: Exact analysis of (quantized) neural
  networks.
\newblock \emph{arXiv preprint arXiv:2106.12912}, 2021.

\bibitem[Meske et~al.(2022)Meske, Bunde, Schneider, and Gersch]{mesk22}
Meske, C., Bunde, E., Schneider, J., and Gersch, M.
\newblock Explainable artificial intelligence: objectives, stakeholders, and
  future research opportunities.
\newblock \emph{Information Systems Management}, 39\penalty0 (1):\penalty0
  53--63, 2022.

\bibitem[Poggio et~al.(2020)Poggio, Banburski, and Liao]{pog20}
Poggio, T., Banburski, A., and Liao, Q.
\newblock Theoretical issues in deep networks.
\newblock \emph{Proceedings of the National Academy of Sciences}, 117\penalty0
  (48):\penalty0 30039--30045, 2020.

\bibitem[Saxe et~al.(2019)Saxe, Bansal, Dapello, Advani, Kolchinsky, Tracey,
  and Cox]{saxe19}
Saxe, A.~M., Bansal, Y., Dapello, J., Advani, M., Kolchinsky, A., Tracey,
  B.~D., and Cox, D.~D.
\newblock On the information bottleneck theory of deep learning.
\newblock \emph{Journal of Statistical Mechanics: Theory and Experiment},
  2019\penalty0 (12):\penalty0 124020, 2019.

\bibitem[Schneider(2022)]{sch22}
Schneider, J.
\newblock Correlated initialization for correlated data.
\newblock \emph{Neural Processing Letters}, pp.\  1--18, 2022.

\bibitem[Schneider \& Vlachos(2021)Schneider and Vlachos]{sch21cla}
Schneider, J. and Vlachos, M.
\newblock Explaining neural networks by decoding layer activations.
\newblock In \emph{International Symposium on Intelligent Data Analysis}, pp.\
  63--75, 2021.

\bibitem[Simonyan \& Zisserman(2014)Simonyan and Zisserman]{sim14}
Simonyan, K. and Zisserman, A.
\newblock Very deep convolutional networks for large-scale image recognition.
\newblock \emph{Int. Conference on Learning Representations (ICLR)}, 2014.

\bibitem[Tapia \& Est{\'e}vez(2020)Tapia and Est{\'e}vez]{tap20}
Tapia, N.~I. and Est{\'e}vez, P.~A.
\newblock On the information plane of autoencoders.
\newblock In \emph{2020 International Joint Conference on Neural Networks
  (IJCNN)}, pp.\  1--8, 2020.

\bibitem[Tishby \& Zaslavsky(2015)Tishby and Zaslavsky]{tish15}
Tishby, N. and Zaslavsky, N.
\newblock Deep learning and the information bottleneck principle.
\newblock In \emph{2015 IEEE Information Theory Workshop (ITW)}, pp.\  1--5.
  IEEE, 2015.

\bibitem[Tishby et~al.(2000)Tishby, Pereira, and Bialek]{tish00}
Tishby, N., Pereira, F.~C., and Bialek, W.
\newblock The information bottleneck method.
\newblock \emph{arXiv preprint physics/0004057}, 2000.

\bibitem[Xiao et~al.(2017)Xiao, Rasul, and Vollgraf]{xia17}
Xiao, H., Rasul, K., and Vollgraf, R.
\newblock Fashion-mnist: a novel image dataset for benchmarking machine
  learning algorithms.
\newblock \emph{arXiv preprint arXiv:1708.07747}, 2017.

\end{thebibliography}
\bibliographystyle{icml2022}

\end{document}